\DeclareMathOperator*{\argmax}{arg\,max}
\newtheorem{Thm}{Theorem}
\newtheorem*{Thm*}{Theorem}
\title{Enumeration of Extractive Oracle Summaries}
\author{
Tsutomu Hirao \and Masaaki Nishino \and Jun Suzuki \and Masaaki Nagata\\
NTT Communication Science Laboratories, NTT Corporation\\
2-4 Hikaridai, Seika-cho, Soraku-gun, Kyoto, 619-0237, Japan\\
{\tt \{hirao.tsutomu,nishino.masaaki\}@lab.ntt.co.jp }\\
{\tt \{suzuki.jun,nagata.masaaki\}@lab.ntt.co.jp }\\
}
\date{}
\begin{document}

\maketitle
\begin{abstract}
To analyze the limitations and the future directions of the extractive
 summarization paradigm, 
this paper proposes an Integer Linear Programming (ILP) formulation to
 obtain {\it extractive oracle summaries} in terms of $\text{\sc Rouge}_n$.
We also propose an algorithm that enumerates 
 all of the oracle summaries for a set of reference summaries to exploit F-measures that
 evaluate which system summaries contain how many sentences
 that are extracted as an oracle summary.
 Our experimental results obtained from Document Understanding Conference (DUC) corpora demonstrated the following:
(1) room still exists to 
 improve the performance of extractive summarization; 
(2) the F-measures derived from the enumerated oracle summaries have
 significantly stronger
 correlations with human judgment than those derived from
 single oracle summaries.
\end{abstract}

\section{Introduction}

Recently, compressive and abstractive summarization are attracting attention (e.g., \newcite{Almeida13}, \newcite{Qian13},
\newcite{Yao:IJCAI15}, \newcite{Banerjee:IJCAI15}, \newcite{Bing15}).
However, extractive summarization remains a primary research topic
because the linguistic quality of the resultant summaries
is guaranteed, at least at the sentence level, which is a key requirement for
practical use
(e.g., \newcite{hong-nenkova:2014:EACL},
\newcite{hong-marcus-nenkova:2015:EMNLP},
\newcite{yogatama-liu-smith:2015:EMNLP}, \newcite{parveen-ramsl-strube:2015:EMNLP}).

The summarization research community is experiencing a paradigm
shift from extractive to compressive or abstractive summarization.
Currently our question is: ``Is extractive summarization still useful research?''
To answer it, 
the ultimate limitations of the extractive
summarization paradigm must be comprehended;
that is, we have to determine its upper bound and compare it with the performance of the
state-of-the-art summarization methods.
Since $\text{\sc Rouge}_n$ is the de-facto automatic evaluation method
and is employed in many text summarization studies, an oracle summary is
defined as a set of sentences that have a maximum $\text{\sc Rouge}_n$ score.
If the $\text{\sc Rouge}_n$ score of an oracle summary
 outperforms that of a system that employs
another summarization approach, the extractive summarization paradigm
is worthwhile to leverage research resources.

As another benefit,
identifying an oracle summary for a
set of reference summaries allows us to utilize yet another evaluation measure.
Since both oracle and extractive summaries are sets of sentences, 
it is easy to check whether a system summary contains sentences in the 
oracle summary.
As a result, F-measures, which are
available to evaluate a system summary,
are useful for evaluating classification-based extractive
summarization \cite{Mani:1998,osborne:2002,hirao02}.
Since $\text{\sc Rouge}_n$ evaluation does not identify
which sentence is important, an F-measure conveys useful information in
terms of ``important sentence extraction.''  
Thus, combining  $\text{\sc Rouge}_n$ and an F-measure allows us
to scrutinize the failure analysis of systems.

Note that more than one oracle summary might exist for a set of reference
summaries because $\text{\sc
Rouge}_n$ scores are based on the unweighted counting of n-grams.
As a result, an F-measure might not be identical among multiple oracle summaries.
Thus, we need to enumerate the oracle summaries for a set of reference summaries and 
compute the F-measures based on them.

In this paper,  we first derive an Integer Linear
 Programming (ILP) problem to extract an oracle summary from a set of reference summaries
 and a source document(s).
To the best of our knowledge, this is the first ILP formulation that extracts
 oracle summaries.
Second, since it is difficult to enumerate oracle summaries for a set of reference
 summaries using ILP solvers,
 we propose an algorithm that efficiently enumerates 
 all oracle summaries by exploiting the branch and bound technique.
Our experimental results on the Document Understanding Conference (DUC)
corpora showed the following:
\begin{enumerate}
 \item Room still exists for the further improvement of extractive
       summarization, {\it i.e.}, where the
      $\text{\sc Rouge}_n$ scores of the oracle summaries are
       significantly higher than those of the state-of-the-art summarization
       systems.
 \item
      The F-measures derived from multiple oracle summaries obtain
      significantly stronger correlations with human judgment than those
      derived from single oracle summaries.
\end{enumerate}

\section{Definition of Extractive Oracle Summaries}

We first briefly describe $\text{\sc Rouge}_n$.
Given set of reference summaries $\boldsymbol{R}$ and system summary $S$, $\text{\sc Rouge}_n$
is defined as follows:
\begin{equation}
\label{rouge}
\begin{split}
\text{\sc Rouge}_n(\boldsymbol{R},S)=&\\
&\kern-7em \frac{ \displaystyle\sum_{k=1}^{|\boldsymbol{R}|}\sum_{j=1}^{|U({\cal R}_k)|}\min\{N(g_j^n,{\cal R}_k),N(g_j^n,{\cal
 S})\}}{\displaystyle\sum_{k=1}^{|\boldsymbol{R}|} \sum_{j=1}^{|U({\cal R}_k)|} N(g_j^n,{\cal R}_k)}.
\end{split}
\end{equation}

\noindent ${\cal R}_k$ denotes the multiple set of n-grams that occur in
$k$-th reference summary $R_k$, and  $\mathcal{S}$ denotes the multiple set of n-grams
that appear in system-generated summary $S$ (a set of sentences).
$N(g_j^n,{\cal R}_k)$ and $N(g_j^n,{\cal S})$ return the number of occurrences of 
n-gram $g_j^n$ in the $k$-th reference and system summaries, respectively.
Function $U(\cdot)$ transforms a multiple set into a normal set.
$\text{\sc Rouge}_n$ takes values in the range of $[0,1]$, and when the n-gram occurrences of the system summary agree with those of the reference summary, the value is 1. 

In this paper, we focus on extractive summarization, employ $\text{\sc Rouge}_n$ as an evaluation measure, and define the 
oracle summaries as follows:
\begin{equation}
\label{def:oracle}
\begin{split}
O=&\argmax_{S \subseteq D} \text{\sc Rouge}_n(\boldsymbol{R},S)\\
s.t.& ~~~\ell(S) \le L_{\rm max}. 
\end{split}
\end{equation}

$D$ is the set of all the sentences contained in the input document(s), and
$L_{\rm max}$
is the length limitation of the oracle summary.
 $\ell(S)$ indicates
the number of words in the system summary.
Eq. (\ref{def:oracle}) is an NP-hard combinatorial optimization problem, and no polynomial time algorithms exist that can attain an optimal solution.

\section{Related Work}

\newcite{Lin-Hovy:2003:DUC} utilized a naive exhaustive search method to
obtain oracle summaries in terms of $\text{\sc Rouge}_n$
 and exploited them to understand the
limitations of extractive summarization systems. 
\newcite{ceylan-EtAl:2010:NAACLHLT}
proposed another naive exhaustive search method to derive a probability density
function from the {\sc Rouge}$_n$ scores of oracle summaries for the domains to which source documents belong.
The computational complexity of naive exhaustive methods is exponential to the size of the sentence set. Thus, it may be possible to apply them to single document
summarization tasks involving a dozen sentences, but it is infeasible to apply them to multiple document summarization tasks that involve several hundred sentences.

To describe the difference between the $\text{\sc Rouge}_n$ scores of oracle
and system summaries in multiple document summarization tasks,
\newcite{Riedhammer} proposed an approximate algorithm
 with a genetic algorithm (GA) to find oracle
 summaries. \newcite{Moen14}
 utilized a greedy
algorithm for the same purpose. 
Although GA or greedy algorithms are widely used to solve NP-hard
combinatorial optimization problems, the solutions are not always optimal. Thus, the summary does not always
have a maximum $\text{\sc Rouge}_n$ score for the set of reference summaries.
Both works called the summary found by their methods the oracle, but it differs from the definition in our paper.

Since summarization systems cannot reproduce human-made reference summaries in
most cases, oracle summaries, which can be reproduced by summarization
systems, have been used as training data to tune the parameters of
summarization systems.
For example, \newcite{Kulesza} and \newcite{sipos-shivaswamy-joachims:2012:EACL2012}
trained their summarizers with oracle summaries found by a
greedy algorithm.
\newcite{peyrard-ecklekohler:2016} proposed a method to find a summary that
approximates a {\sc Rouge} score based on the {\sc Rouge}
scores of individual sentences and exploited the framework to train their summarizer.
As mentioned above, such summaries do not always
agree with the oracle summaries defined in our paper. Thus, the quality of the training data is suspect.
Moreover, since these studies fail to consider that a set of reference summaries has
multiple oracle summaries, the score of the loss function defined between
their oracle and system summaries is not appropriate in most cases.

As mentioned above, no known efficient algorithm can extract ``exact'' oracle summaries, as defined in Eq. (2), {\it i.e.}, because only a naive exhaustive search is available.
Thus, such approximate algorithms as a greedy algorithm are mainly employed to obtain them.

\section{Oracle Summary Extraction as an Integer Linear Programming (ILP) Problem }

To extract an oracle summary from document(s) and a given set of reference summaries, we start by deriving an Integer Linear Programming (ILP) problem.
Since the denominator of Eq. (\ref{rouge}) is constant for a given
set of reference summaries, we can find an oracle summary by maximizing
the numerator of Eq.
(\ref{rouge}). Thus, the ILP formulation is defined 
 as follows:
\begin{eqnarray}
   {\displaystyle\mathop\text{maximize}_{\boldsymbol{z}}}
  & \displaystyle\sum_{k=1}^{|\boldsymbol{R}|}\sum_{j=1}^{|U({\cal R}_k)|} z_{kj}\\
 s.t. & \displaystyle\sum_{i=1}^{|D|} \ell(s_i) x_i \le L_{\rm max}\\
      & \forall j : \displaystyle\sum_{i=1}^{|D|} N(g_{j}^n,s_i) x_i \ge z_{kj}\\
      & \forall j : N(g_j^n,{\cal R}_k) \ge z_{kj}\\
      & \displaystyle\forall i : x_i \in \{0,1\}\\
      & \displaystyle\forall j : z_{kj} \in \mathbb{Z}_{+}.
\end{eqnarray}

Here, $z_{kj}$ is the count of the $j$-th n-gram of the $k$-th reference
summary in the oracle summary, {\it
i.e.}, $z_{kj}=\min\{N(g_j^n,{\cal R}_k), N(g_j^n,{\cal S})\}$. $\ell(\cdot)$
returns the number of words in the sentence, $x_i$ is a
binary indicator, and $x_i=1$ denotes that the $i$-th sentence $s_i$ is
included in the oracle summary.
$N(g_j^n,s_i)$ returns the number of occurrences of 
n-gram $g_j^n$ in the $i$-th sentence. Constraints (5) and (6)
 ensure that $z_{kj}=\min\{N(g_j^n,{\cal R}_k), N(g_j^n,{\cal S})\}$.

\section{Branch and Bound Technique for Enumerating Oracle Summaries}

Since enumerating oracle summaries with an ILP solver is difficult,
we extend the exhaustive search approach by introducing a search and
prune technique to enumerate the oracle summaries.
The search pruning decision is made by comparing the current upper bound of
the {\sc Rouge}$_n$ score with the maximum {\sc Rouge}$_n$ score in the
search history.

\subsection{$\text{\sc Rouge}_n$ Score for Two Distinct Sets of Sentences}

\begin{figure}[tb]
 \begin{center}
  \includegraphics[keepaspectratio, scale=0.3]{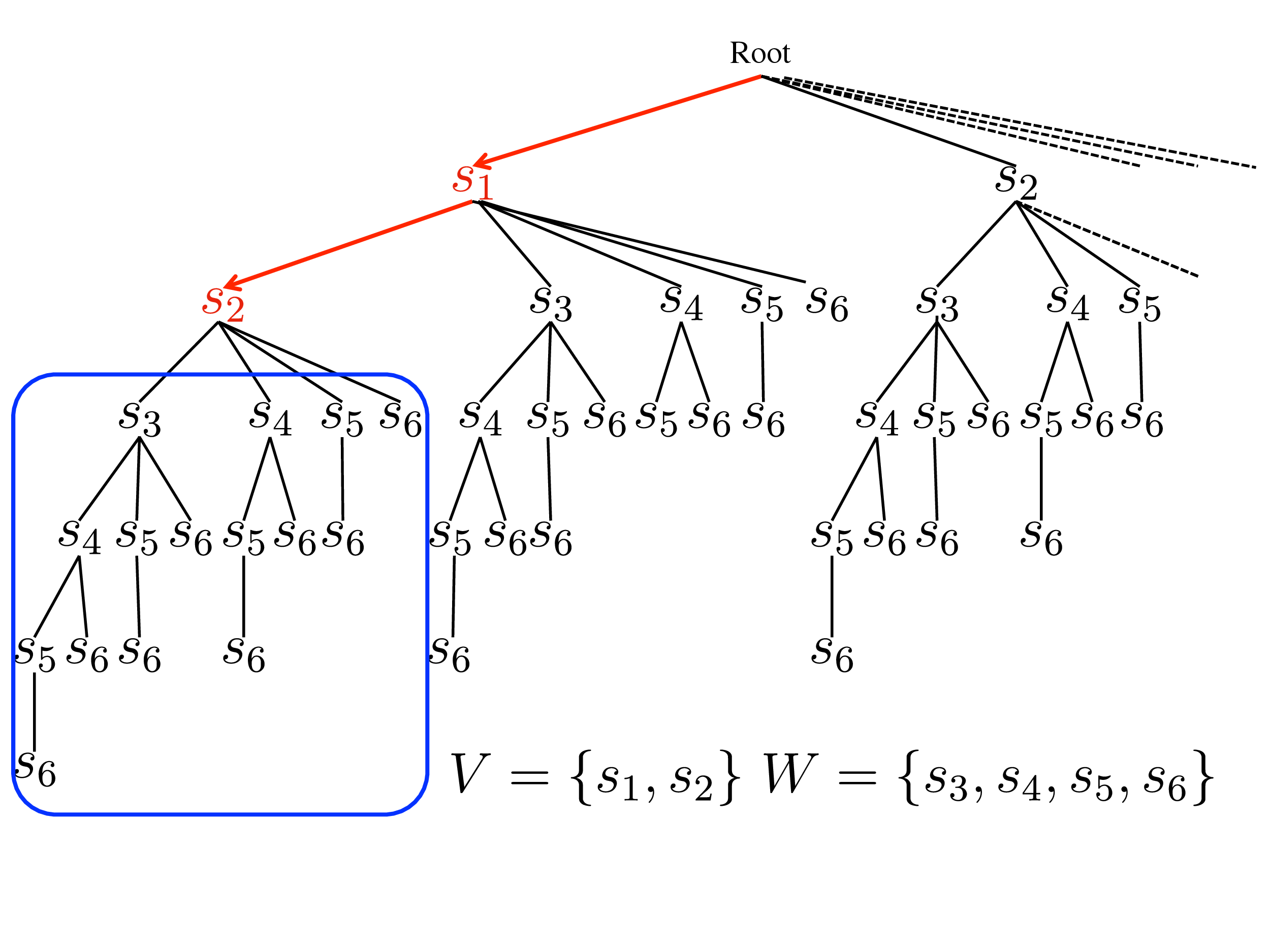}
  \caption{Example of a search tree}
  \label{search_tree}
 \end{center}
\end{figure}

The enumeration of oracle summaries can be regarded as a depth-first
search on a tree whose nodes represent sentences.
Fig. \ref{search_tree} shows an example of a search tree created in a naive
exhaustive search. The nodes represent
sentences and the path from the root node to an arbitrary
node represents a summary. For example, the red path in Fig. \ref{search_tree} from the root node to node $s_2$ represents a
summary consisting of sentences $s_1,s_2$.
By utilizing the tree, we can enumerate oracle summaries
by exploiting depth-first searches while excluding the summaries that violate
length constraints. 
However, this naive exhaustive search approach is impractical for large data sets because the number of nodes inside the
tree is $2^{|D|}$.

If we prune the unwarranted subtrees in each step of the
depth-first search, we can make the search more efficient.
The decision to search or prune is made by comparing the current upper bound of the {\sc Rouge}$_n$ score with the maximum {\sc Rouge}$_n$ score in the search history. For instance, in Fig. \ref{search_tree}, we reach node $s_2$ by following this path: ``Root $\rightarrow$ $s_1$, $\rightarrow$ $s_2$''. 
If we estimate the maximum $\text{\sc Rouge}_n$
score (upper bound) obtained by searching for the descendant of 
$s_2$ (the subtree in the blue rectangle),
we can decide whether the depth-first search should be continued. When the upper bound of the $\text{\sc Rouge}_n$ score exceeds the current maximum $\text{\sc Rouge}_n$ in the search history, we have to continue.
When the upper bound is smaller than the current maximum
$\text{\sc Rouge}_n$ score, no summary is optimal that contains $s_1$, $s_2$, so we can skip subsequent search activity on the subtree and proceed to check the next branch:
``Root $\rightarrow$  $s_1$
$\rightarrow$ $s_3$''.

To estimate the upper bound of the $\text{\sc Rouge}_n$ score,
 we re-define it for two distinct sets of sentences, $V$ and $W$, {\it i.e.,} $V \cap W=\phi$, as follows:
\begin{equation}
\label{our_rouge}
\begin{split}
\text{\sc Rouge}_n(\boldsymbol{R},V{\cup}W)&=\text{\sc Rouge}_n(\boldsymbol{R},V)\\
 &\quad +\text{\sc Rouge}'_n(\boldsymbol{R},V,W).
\end{split}
\end{equation}

\noindent Here $\text{\sc Rouge}'_n$ is defined as follows:
\begin{equation}
\begin{split}
\text{\sc Rouge}'_n(\boldsymbol{R},V,W)=&\\
& \kern-9em \frac{\displaystyle\sum_{k=1}^{|\boldsymbol{R}|}\sum_{t_n \in U({\cal R}_k)}
\min\{N(t_n,{{\cal R}_k \setminus {\cal V}}),N(t_n,{\cal W})\}}{\displaystyle\sum_{k=1}^{|\boldsymbol{R}|}\sum_{t_n
 \in U({\cal R}_k))} N(t_n,{\cal R}_k)}.
\end{split}
\end{equation}

\noindent ${\cal V,W}$ are the multiple sets of n-grams found in the sets
of sentences $V$ and $W$, respectively.
\begin{Thm}
Eq. (\ref{our_rouge}) is correct.
\end{Thm}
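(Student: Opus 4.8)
The plan is to exploit the fact that all three quantities in Eq.~(\ref{our_rouge}) share the identical denominator $\sum_{k=1}^{|\boldsymbol{R}|}\sum_{t_n\in U({\cal R}_k)} N(t_n,{\cal R}_k)$, which is fixed once the reference set is given. Consequently the claim reduces to proving equality of the numerators, and since every numerator is a double sum over the same index set (the references $k$ and the distinct n-grams $t_n\in U({\cal R}_k)$), it suffices to verify the identity term by term, i.e., for a single reference $k$ and a single n-gram $t_n$.

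The second preparatory step is to pin down the two multiset operations that appear. Because $V$ and $W$ are disjoint \emph{sets of sentences}, the multiset of n-grams of the summary $V\cup W$ is the multiset sum of those of $V$ and $W$, so the count of $t_n$ in $V\cup W$ equals $N(t_n,{\cal V})+N(t_n,{\cal W})$. Likewise, the multiset difference satisfies $N(t_n,{\cal R}_k\setminus{\cal V})=\max\{N(t_n,{\cal R}_k)-N(t_n,{\cal V}),\,0\}$. Writing $a=N(t_n,{\cal R}_k)$, $v=N(t_n,{\cal V})$, and $w=N(t_n,{\cal W})$, the per-term claim becomes the purely arithmetic identity
\begin{equation*}
\min\{a,\,v+w\}=\min\{a,\,v\}+\min\{\max\{a-v,0\},\,w\}
\end{equation*}
for non-negative integers $a,v,w$.

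Finally I would prove this identity by a short case analysis: first on whether $v\ge a$ or $v<a$, and in the latter case on whether $w\le a-v$ or $w>a-v$. In each branch both sides collapse to either $a$ or $v+w$, and they match exactly. Reassembling the per-term equalities into the double sums and dividing through by the common denominator then yields Eq.~(\ref{our_rouge}) and, by the same token, a decomposition of $\text{\sc Rouge}_n$ that is additive in the increment $W$.

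I expect the main obstacle to be expository rather than mathematical: the bookkeeping needed to justify the multiset interpretations---in particular that forming the summary $V\cup W$ acts as multiset \emph{addition} on n-gram counts (valid precisely because the sentence sets are disjoint, which is the hypothesis $V\cap W=\phi$) and that $\setminus$ is truncated subtraction flooring at zero. Once these are fixed, the reduction to a single-variable min-identity makes the remaining argument elementary.
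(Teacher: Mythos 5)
Your proposal is correct and follows essentially the same route as the paper's Appendix A: reduce to the common denominator, define the per-term quantity $\min\{N(t_n,{\cal R}_k),N(t_n,{\cal V})\}+\min\{\max\{N(t_n,{\cal R}_k)-N(t_n,{\cal V}),0\},N(t_n,{\cal W})\}$, and show by the same three-way case analysis that it collapses to $\min\{N(t_n,{\cal R}_k),N(t_n,{\cal V})+N(t_n,{\cal W})\}$. Your explicit remark that disjointness of $V$ and $W$ is what makes the n-gram counts additive is a point the paper leaves implicit, but the argument is the same.
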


\begin{proof}
See Appendix A. 
\end{proof}

\begin{algorithm}[tb]
 \caption{Algorithm to Find Upper Bound of {\sc Rouge}$_n$}
 \label{findupper}
\begin{footnotesize}
 \begin{algorithmic}[1]
  \FUNC {$\widehat{\text{\sc Rouge}_n}(\boldsymbol{R},V$)}
  \STATE $W \leftarrow {\rm descendant}({\rm last}(V))$,~~$W' \leftarrow \phi$
  \STATE $U \leftarrow \mbox{\sc Rouge}(\boldsymbol{R},V)$
  \FOR {{\bf each } $w \in W$}
  \STATE append($W', \frac{\text{\sc Rouge}'_n(\boldsymbol{R},V,\{w\})}{\ell(w)}$)
  \ENDFOR 
  \STATE sort($W',\text{'descend'}$)
  \FOR {{\bf each } $w \in W'$}
  \IF {$L_{\rm max} - \ell(\{w\}) \ge 0$}
  \STATE $U \leftarrow U+\text{\sc Rouge}'_n(\boldsymbol{R},V,\{w\})$
  \STATE $L_{\rm max} \leftarrow L_{\rm max}-\ell(\{w\})$
  \ELSE
  \STATE  $\displaystyle U \leftarrow U+\frac{\text{\sc
  Rouge}'_n(\boldsymbol{R},V,\{w\})}{\ell(\{w\})} \times L_{\rm max}$
  \STATE break the loop
  \ENDIF
  \ENDFOR
  \STATE {\bf return} $U$
  \ENDFUNC
 \end{algorithmic}
\end{footnotesize}
\end{algorithm}

\subsection{Upper Bound of $\text{\sc Rouge}_n$}

Let $V$ be the set of sentences on the path from the current node to
the root node in the search tree, and let $W$ be the set of sentences 
that are the descendants of the current node. 
In Fig. \ref{search_tree}, $V{=}\{s_1,s_2\}$ and
$W{=}\{s_3,s_4,s_5,s_6\}$.
According to Theorem 1, the upper bound of the $\text{\sc Rouge}_n$ score is defined as:
\begin{eqnarray}
\label{maxrouge}
& \kern-5em \widehat{\text{\sc Rouge}_n}(\boldsymbol{R},V)=\text{\sc Rouge}_n(\boldsymbol{R},V)+\nonumber\\ 
&\kern-2em \displaystyle\mathop{\max_{\Omega \subseteq W}}\{\text{\sc
 Rouge}_n'(\boldsymbol{R},V,\Omega){:}\ell(\Omega){\le} L_{\rm max}{-}\ell(V) \}.
\end{eqnarray}

Since the second term on the right side in Eq. (\ref{maxrouge}) is
an NP-hard problem, we turn to the following relation by introducing
inequality, $\text{\sc Rouge}'_n(\boldsymbol{R},V,\Omega) \le \sum_{\omega \in \Omega}
\text{\sc Rouge}'_n(\boldsymbol{R},V,\{\omega\})$,
\begin{eqnarray}
\label{inequality}
\small
&\kern-1em\displaystyle\mathop{\max_{\Omega \subseteq W}} \left\{ \text{\sc
						  Rouge}'_n(\boldsymbol{R},V,\Omega){:}
						  \ell(\Omega) {\le}
						  L_{\rm max}{-}\ell(V)\right\} \nonumber\\
&\kern-2em {\le} {\displaystyle\max_{\boldsymbol{x}}} \left\{ \sum_{i=1}^{|W|}\text{\sc
Rouge}'_n(\boldsymbol{R}, V,\{w_i\}) x_i{:} \right. \nonumber\\
&\kern-5em \left. {\sum_{i=1}^{|W|}}\ell(\{w_i\}) x_i{\le}
  L_{\rm max}{-}\ell(V) \right\}.
\end{eqnarray}

\noindent Here, $\mathbf{x} = (x_1, \ldots, x_{|W|})$ and $x_i \in \{0,1\}$. 
The right side of Eq. (\ref{inequality}) is a knapsack problem,
{\it i.e.}, a 0-1 ILP problem. 
Although we can obtain the optimal solution for it using dynamic programming or ILP solvers, we solve its linear programming relaxation version by applying a greedy algorithm for greater computation efficiency.
The solution output by the greedy algorithm is optimal for the relaxed problem.
Since the optimal solution of the relaxed problem is always
larger than that of the original problem, the relaxed problem solution can be utilized as the upper bound.
Algorithm
\ref{findupper} shows the pseudocode that attains the upper bound of
$\text{\sc Rouge}_n$.
In the algorithm, $U$ indicates the upper bound score of $\text{\sc Rouge}_n$.
We first set the initial score of upper bound $U$ to
$\text{\sc Rouge}_n(\boldsymbol{R},V)$ (line 3). Then
we compute the density of the $\text{\sc Rouge}'_n$ scores ($\text{\sc
Rouge}_n'(\boldsymbol{R},V,\{w\})/\ell(w)$) for each sentence $w$ in $W$ and
sort them in descending order (lines 4 to 6).
When we have room to add $w$ to the summary, we update $U$ by adding the
$\text{\sc Rouge}'_n(\boldsymbol{R},V,\{w\})$ (line 10) and update length constraint
$L_{\rm max}$
(line 11).
When we do not have room to add $w$, we update $U$ by adding 
the score obtained by multiplying the density of $w$ by the remaining length,
$L_{\rm max}$ (line 13), and exit the while loop.

\subsection{Initial Score for Search}

\begin{algorithm}[tb]
 \caption{Greedy algorithm to obtain initial score}
 \label{findlower}
 \footnotesize
 \begin{algorithmic}[1]
  \FUNC {\text{\sc Greedy}($\boldsymbol{R},D,L_{\rm max}$)}
  \STATE $L \leftarrow 0, S \leftarrow \phi,E \leftarrow D$
  \WHILE{$E \neq \phi$}
  \STATE $s^*{\leftarrow}\displaystyle\mathop{\rm arg~max}_{s \in E}
 {\tiny \left\{\frac{\text{\sc Rouge}_n(\boldsymbol{R},S \cup \{s\}){-}\text{\sc Rouge}_n(\boldsymbol{R},S)}{\ell(\{s\})}\right\}}$
  \STATE $L \leftarrow L+\ell(\{s^*\})$
  \IF {$L \le L_{\rm max}$}
  \STATE $S \leftarrow S \cup \{s^*\}$
  \ENDIF
  \STATE $E \leftarrow E \setminus \{s^*\}$
  \ENDWHILE
  \STATE $i^* \leftarrow \displaystyle\mathop{\rm arg~max}_{i \in D,
  \ell(\{i\}) \le L_{\rm max}} \text{\sc Rouge}_n(
  \boldsymbol{R},\{i\})$
  \STATE $S^* \leftarrow \displaystyle\mathop{\rm arg~max}_{K \in \{\{i^*\},S\}}
  \text{\sc Rouge}_n(\boldsymbol{R},K)$
  \STATE {\bf return} $\text{\sc Rouge}_n(\boldsymbol{R},S^*)$
  \ENDFUNC
 \end{algorithmic}
\end{algorithm}

Since the branch and bound technique prunes the search by comparing the best
solution found so far with the upper bounds, obtaining a good solution in
the early stage is critical for raising search efficiency.

Since $\text{\sc Rouge}_n$ is a monotone submodular function \cite{hlin11}, we can obtain a good approximate solution by a greedy algorithm \cite{khuller99}. 
It is guaranteed that the score of the obtained approximate solution is
larger than $\frac{1}{2}(1-\frac{1}{e})\text{OPT}$, where OPT is the
score of the optimal solution.
We employ the solution as the initial {\sc Rouge}$_n$ score of the candidate oracle summary.

Algorithm \ref{findlower} shows the greedy algorithm. 
In it, $S$ denotes a summary and $D$ denotes a set
of sentences.
The algorithm iteratively adds sentence $s^*$
that yields the largest gain in the $\text{\sc Rouge}_n$ score to current
summary $S$,  provided the length of the summary does not violate 
length constraint $L_{\rm max}$ (line 4). 
After the while loop, the algorithm compares the $\text{\sc Rouge}_n$
score of $S$ with  the maximum $\text{\sc Rouge}_n$ score of the single sentence 
and outputs the larger of the two scores (lines 11 to 13).

\subsection{Enumeration of Oracle summaries}

\begin{algorithm}[tb]
 \caption{Branch and bound technique to enumerate oracle summaries}
 \label{findoracle}
 \footnotesize
 \begin{algorithmic}[1]
  \STATE Read $\boldsymbol{R}$,$D$,$L_{\rm max}$
  \STATE $\tau \leftarrow \text{\sc Greedy}(R,D,L_{\rm max})$,$O_{\tau} \leftarrow \phi$
  \FOR {{\bf each } $s \in D$ }
  \STATE append($S$,$\langle \text{\sc Rouge}_n(\boldsymbol{R},\{s\}),s \rangle$)
  \ENDFOR
  \STATE sort($S$,'descend') 
  \STATE {\bf call} {\sc FindOracle}($S,C$)
  \STATE output $O_{\tau}$
  \PROC {{\sc FindOracle}($Q,V$)}
  \WHILE{$Q \neq \phi$}
  \STATE $s \leftarrow $shift($Q$)
  \STATE append($V,s$)
  \IF {$L_{\rm max}-\ell(V) \ge 0$}
  \IF {$\text{\sc Rouge}_n(\boldsymbol{R},V) \ge \tau$}
  \STATE {$\tau \leftarrow \text{\sc Rouge}_n(\boldsymbol{R},V)$}
  \STATE append($O_{\tau}, V)$
  \STATE {{\bf call} {\sc FindOracle}($Q,V$)}
  \ELSIF {$\widehat{\text{\sc Rouge}_n}(\boldsymbol{R},V) \ge \tau$}
  \STATE {{\bf call} {\sc FindOracle}($Q,V$)}
  \ENDIF
  \ENDIF
  \STATE pop($V$)
  \ENDWHILE
  \ENDPROC
 \end{algorithmic}
\end{algorithm}

By introducing threshold $\tau$ as the best $\text{\sc Rouge}_n$ score
in the search history, pruning decisions involve the following three conditions:
\begin{enumerate}
 \setlength{\itemsep}{-0.1cm}
 \item $\text{\sc Rouge}_n(\boldsymbol{R},V) \ge \tau$;
 \item $\text{\sc Rouge}_n(\boldsymbol{R},V)< \tau$, $\widehat{\text{\sc Rouge}_n}(\boldsymbol{R},V)< \tau$;
 \item $\text{\sc Rouge}_n(\boldsymbol{R},V)< \tau$, $\widehat{\text{\sc Rouge}_n}(\boldsymbol{R},V)\ge \tau$.
\end{enumerate}

With case 1, we update the oracle summary as $V$ and continue the
search. With case 2, because both $\text{\sc Rouge}_n(\boldsymbol{R},V)$ and $\widehat{\text{\sc Rouge}_n}(\boldsymbol{R},V)$ are smaller than $\tau$, the subtree whose root node is the current node (last visited node) is pruned from the search space, and we continue the depth-first search from the neighbor node.
With case 3, we do not update oracle summary as $V$ because
$\text{\sc Rouge}_n(\boldsymbol{R},V)$ is less than $\tau$. However, we might obtain a better oracle summary by continuing the depth-first search because the upper bound of the $\text{\sc Rouge}_n$ score exceeds $\tau$. Thus, we continue to search
for the descendants of the current node.

Algorithm \ref{findoracle} shows the pseudocode that enumerates the oracle summaries.
The algorithm reads a set of reference summaries $\boldsymbol{R}$, length limitation
$L_{\rm max}$, and set of sentences $D$ (line 1) and 
initializes threshold $\tau$ as the $\text{\sc Rouge}_n$ score
obtained by the greedy algorithm (Algorithm \ref{findlower}).
It also initializes $O_{\tau}$, which stores oracle summaries whose $\text{\sc
Rouge}_n$ scores are $\tau$, and priority queue $C$,
which stores the history of the depth-first search (line 2).    
Next, the algorithm computes the $\text{\sc Rouge}_n$ score for each sentence
 and stores $S$ after sorting them in descending order. 
 After that, we start a depth-first search by recursively calling procedure {\sc
FindOracle}. In the procedure, we extract the top sentence
from priority queue $Q$ and append it to priority queue $V$ (lines 11 to 12).
When the length of $V$ is less than $L_{\rm max}$, if $\text{\sc Rouge}_n(\boldsymbol{R},V)$
is larger than threshold $\tau$ (case 1), we update $\tau$ as the score and
append current $V$ to $O_{\tau}$. Then we continue the depth-first search by
calling the procedure the {\sc FindOracle} (lines
15 to 17).
 If $\widehat{\text{\sc Rouge}_n}(\boldsymbol{R},V)$ is larger than $\tau$ (case 3), we do not update $\tau$ and $O_{\tau}$ but reenter the
depth-first search by calling the procedure again (lines 18 to 19).
If neither case 1 nor case 3 is true, we delete the last visited sentence from $V$ and return to the top of the recurrence.

\begin{table}[tb]
 \begin{center}
  {\tabcolsep=0.8mm 
  \begin{tabular}{l|llllll}
   Year & Topics & Docs. & Sents. & Words & Refs. & Length\\
   \hline
   01 & 30 & 10 & 365 & 7706 & 89 & 100\\
   02 & 59 & 10 & 238 & 4822 & 116 & 100\\
   03 & 30 & 10 & 245 & 5711 & 120 & 100\\
   04 & 50 & 10 & 218 & 4870 & 200 & 100\\
   05 & 50 & 29.5  & 885 & 18273.5  & 300 & 250\\
   06 & 50 & 25 &  732.5 & 15997.5 &200 & 250\\
   07 & 45 & 25 & 516 & 11427 & 180 & 250\\
  \end{tabular}
}
  \caption{Statistics of data set}
  \label{stats_multi}
 \end{center}
\end{table}

\section{Experiments}

\subsection{Experimental Setting}

We conducted experiments on the corpora developed for a multiple document
summarization task in DUC 2001 to 2007.
Table \ref{stats_multi} show the statistics of the data.
In particular, the DUC-2005 to -2007 data sets not only have very large
numbers of sentences and words but also a long target length (the
reference summary length) of 250 words.

All the words in the documents were stemmed by Porter's stemmer \cite{porter80}.
We computed $\text{\sc Rouge}_{1}$ scores, excluding stopwords,
and computed $\text{\sc Rouge}_{2}$ scores, keeping
them.
\newcite{owczarzak-EtAl:2012:WEAS} suggested using $\text{\sc Rouge}_{1}$ and keeping stopwords.
However, as Takamura et al. argued \cite{takamura09}, the summaries optimized with non-content words failed to consider the actual quality. Thus, we
excluded stopwords for computing the $\text{\sc Rouge}_{1}$ scores.

We enumerated the following two types of oracle summaries: those for a set of references for a given topic and those for each reference in the set of references. 

\subsection{Results and Discussion}

\subsubsection{Impact of Oracle {\sc Rouge}$_n$ scores}

\begin{table*}[tb]
 \begin{center}
  {\tabcolsep=1.1mm 
  \begin{tabular}{l|rr|rr|rr|rr|rr|rr|rr}
   &
   \multicolumn{2}{c|}{01} &
   \multicolumn{2}{c|}{02} &
   \multicolumn{2}{c|}{03} &
   \multicolumn{2}{c|}{04} &
   \multicolumn{2}{c|}{05} &
   \multicolumn{2}{c|}{06} &
   \multicolumn{2}{c}{07} \\
   & R$_1$ & R$_2$ & R$_1$ & R$_2$ & R$_1$ & R$_2$
				   & R$_1$ & R$_2$ & R$_1$ & R$_2$ & R$_1$ & R$_2$ & R$_1$ &R$_2$\\
   \hline
   Oracle (multi)   & .400 & .164 & .452 & .186 & .434 & .185 & .427 & .162 &
   .445 & .177   & .491 & .211  & .506 & .236\\
   Oracle (single)   & .500 & .226 & .515 & .225 & .525 & .258 & .519 & .228 &
   .574 & .279   & .607 & .303  & .622 & .330\\
   Greedy     & .387 & .161 & .438 & .184 & .424 & .182
               & .412 & .157 & .430 & .173 & .473 & .206 & .495 & .234\\
   Peer         &  .251 & .080 & .269 & .080 & .295 & .094 &
			       .305 & .092 & .262 & .073 & .305 & .095 & .363 & .117\\
   ID      & T & T  & 19 & 19 & 26 & 13 & 67 &  65 & 10 & 15 & 23 & 24
   &29 & 15\\
  \end{tabular}
}
  \caption{{\sc Rouge}$_{1,2}$ scores of oracle summaries, greedy summaries, and system summaries for each data set}
  \label{oracle_scores}
 \end{center}
\end{table*}

Table \ref{oracle_scores} shows the average $\text{\sc Rouge}_{1,2}$ scores of
 the oracle summaries obtained from both a set of references and each reference 
 in the set (``multi'' and ``single''), those of
 the best conventional system (Peer), and those obtained from
 summaries produced by a greedy algorithm (Algorithm \ref{findlower}).

Oracle (single) obtained better $\text{\sc Rouge}_{1,2}$ scores than
Oracle (multi). The results imply that it is easier to optimize a
reference summary than a set of reference summaries. 
On the other hand, the $\text{\sc Rouge}_{1,2}$ scores of these oracle
summaries are significantly higher than those of the best systems.
The best systems obtained $\text{\sc Rouge}_1$ scores
from 60\% to 70\% in ``multi'' and from 50\% to 60\% in ``single''
as well as $\text{\sc Rouge}_2$ scores from 40\% to 55\% in ``multi''
and from 30\% to 40\% in ``single'' for their oracle summaries.

Since the systems in Table \ref{oracle_scores} were developed over
many years, we compared the $\text{\sc Rouge}_n$ scores of the oracle summaries with 
 those of the current state-of-the-art systems using the DUC-2004 corpus and
obtained summaries generated by different systems from a public 
repository\footnote{\url{http://www.cis.upenn.edu/~nlp/corpora/sumrepo.html}}
\cite{HONG14LREC}.
The repository includes 
 summaries produced by the following seven state-of-the-art summarization systems: CLASSY04 \cite{classy04}, CLASSY11
\cite{classy11}, Submodular 
\cite{hui2012-submodular-shells-summarization},
 DPP \cite{Kulesza}, RegSum \cite{hong-nenkova:2014:EACL}, OCCAMS\_V
 \cite{icdmws:DavisCS12,conroy-EtAl:2013:MultiLing}, and ICSISumm
 \cite{gillick-favre:2009:ILPNLP,gillick:tac:09}.
Table \ref{results:duc04}
shows the results.

Based on the results, RegSum \cite{hong-nenkova:2014:EACL} achieved the best $\text{\sc Rouge}_{1}{=}0.331$ result, while ICSISumm
\cite{gillick-favre:2009:ILPNLP,gillick:tac:09} (a compressive
summarizer) achieved the best result with $\text{\sc Rouge}_{2}{=}0.098$.
These systems outperformed the best systems (Peers 65 and 67 in Table \ref{oracle_scores}),
but the differences in the $\text{\sc Rouge}_n$ scores between the systems
and the oracle summaries are still large.
More recently,
\newcite{hong-marcus-nenkova:2015:EMNLP} demonstrated that their system's
combination approach achieved the current best $\text{\sc Rouge}_{2}$
score, 0.105, for the DUC-2004 corpus.
However, a large difference remains between the $\text{\sc Rouge}_2$
score of oracle and their summaries.

In short, the $\text{\sc Rouge}_n$ scores of the oracle summaries are significantly higher than those of the current state-of-the-art summarization systems, both extractive and compressive summarization.
 These results imply that further
 improvement of the performance of extractive summarization is possible.

On the other hand, the $\text{\sc Rouge}_n$ scores of the oracle summaries
are far from $\text{\sc Rouge}_n=1$.
We believe that the results are related to the summary's compression rate.
The data set's compression rate was only 1 to 2\%.
Thus, under tight length constraints, extractive summarization
basically fails to cover large numbers of n-grams in the reference
summary.
This reveals the limitation of the extractive summarization paradigm
 and suggests that we need another direction, compressive or abstractive
 summarization, to overcome the limitation.

\begin{table}[tb]
 \begin{center}
  \begin{tabular}{l|ll}
   System & $\text{\sc Rouge}_{1}$ & $\text{\sc Rouge}_{2}$\\
   \hline
   Oracle (multi)    & .427 & .162\\
   Oracle (single)   & .519 & .228\\
   \hline
   CLASSY04    & .305 & .0897\\
   CLASSY11    & .286 & .0919\\
   Submodular  & .300 &  .0933\\
   DPP         & .309 &  .0960\\
   RegSum      & {\bf .331}  & .0974 \\
   OCCAMS\_V   & .300 & .0974\\
   ICSISumm    & .310 & {\bf .0980}\\

  \end{tabular}
  \caption{$\text{\sc Rouge}_{1,2}$ scores for state-of-the-art
  summarization systems on DUC-2004 corpus}
  \label{results:duc04}
 \end{center}
\end{table}

\subsubsection{{\sc Rouge} Scores of Summaries Obtained from Greedy Algorithm}

Table \ref{oracle_scores} also shows the {\sc Rouge}$_{1,2}$ scores of the summaries
obtained from the greedy algorithm (greedy summaries). Although there are
statistically significant differences between the {\sc Rouge} scores of the oracle
summaries and greedy summaries, those obtained from the 
 greedy summaries achieved near
optimal scores, {\it i.e.}, approximation ratio of them are close to 0.9.
These results are surprising since 
the algorithm's theoretical lower bound is
$\frac{1}{2}(1-\frac{1}{e})(\simeq 0.32)$OPT.

On the other hand, the results do not support that the differences between them are
small at the sentence-level. Table \ref{jaccard} shows the average Jaccard Index between the
oracle summaries and the corresponding greedy summaries
for the DUC-2004 corpus. The results demonstrate that the oracle summaries are
much less similar to the greedy summaries at the sentence-level. Thus, it
might not be appropriate to use greedy summaries as training data for
learning-based extractive summarization systems.

\begin{table}[tb]
 \begin{center}
  \begin{tabular}{l|ll}
    & single & multi\\
   \hline
   {\sc Rouge}$_1$     & .451 & .419\\
   {\sc Rouge}$_2$     & .536 & .530\\
  \end{tabular}
  \caption{Jaccard Index between both oracle and greedy summaries}
  \label{jaccard}
 \end{center}
\end{table}

\subsubsection{Impact of Enumeration}

\begin{table*}[tb]
 \begin{center}
  \begin{tabular}{l|rr|rr|rr|rr}
   &\multicolumn{4}{c|}{Median}  &\multicolumn{4}{c}{Rate}\\
   &\multicolumn{2}{c}{single}  &\multicolumn{2}{c|}{multi}
   &\multicolumn{2}{c}{single}  &\multicolumn{2}{c}{multi}\\
   &   {\sc Rouge}$_1$ & {\sc Rouge}$_2$  &   {\sc Rouge}$_1$ & {\sc Rouge}$_2$
     &   {\sc Rouge}$_1$ & {\sc Rouge}$_2$  &   {\sc Rouge}$_1$ & {\sc Rouge}$_2$\\
   \hline
01   &  8    & 9    & 4 & 5    & .854   &  .787 & .833 & .733 \\
02   &  7.5  & 5.5  & 4 & 4    & .897   &  .836 & .814 & .780 \\
03   &  8    & 10.5 & 3.5 & 4  &  .833   &  .858 & .800 & .900   \\
04   &  8    & 8    & 3.5 & 3  &  .865   &  .865  & .780 & .760 \\
05   &  35   & 35.5 & 2   & 3  &  .916   &  .907  & .580 & .660 \\
06  &  28  & 22     & 2.5 & 3  &    .877    &  .880 & .700 & .720  \\
07   &  23   & 16   & 4   & 2  &  .910    &  .878  & .733 & 711  \\
  \end{tabular}
  \caption{Median number of oracle summaries and rates of reference
  summaries and topics with multiple oracle summaries for each data set}
  \label{agreement}
 \end{center}
\end{table*}

Table \ref{agreement} shows the median number of oracle summaries and
the rates of the reference summaries that have multiple
oracle summaries for each data set.
Over 80\% of the reference summaries and about 
60\% to 90\% of the topics have
multiple oracle summaries.
Since the $\text{\sc Rouge}_n$ scores are based on the unweighted counting of n-grams,
when many sentences have similar meanings, {\it i.e.,}
many redundant sentences, the
number of oracle summaries that have the same
$\text{\sc Rouge}_n$ scores increases.
The source documents of multiple document summarization tasks
are prone to have many such redundant sentences, and
the amount of oracle summaries is large.

The oracle summaries offer significant
benefit with respect to evaluating the extracted sentences.
Since both the oracle and system summaries are sets of sentences,
it is easy to check whether each sentence in the system summary is contained in one of the oracle summaries. 
Thus, we can exploit the F-measures, which are useful for evaluating
classification-based extractive summarization \cite{Mani:1998,osborne:2002,hirao02}.
Here, we have to consider that the oracle summaries, obtained from a
reference summary or a set of reference summaries, are not identical at the
sentence-level (e.g., the average Jaccard Index between the oracle summaries
for the DUC-2004 corpus is around 0.5).
The F-measures are varied with the oracle summaries that are used for such computation.
For example, assume that we have system summary $S{=}\{s_1,s_2,s_3,s_4\}$ and
oracle summaries $O_1{=}\{s_1,s_2,s_5,s_6\}$ and 
$O_2{=}\{s_1,s_2,s_3\}$. 
The precision for $O_1$  is 0.5, while that for $O_2$ is 0.75; the recall
for $O_1$ is 0.5, while that for $O_2$ is 1; the F-measure for $O_1$ is 0.5,
while that for $O_2$ is 0.86.

Thus, we employ the scores gained by averaging all of the oracle summaries as evaluation measures.
Precision, recall, and F-measure are defined as follows:
$P{=}\{\sum_{O \in O_{\rm all}}|O \cap S|/|S|\}/|O_{\rm all}|$,
$R{=}\{\sum_{O \in O_{\rm all}}|O \cap S|/|O|\}/|O_{\rm all}|$,
$\text{F-measure}{=}2PR/(P+R)$.

To demonstrate F-measure's effectiveness, we investigated the
correlation between an F-measure and human judgment based on the evaluation
results obtained from the DUC-2004 corpus.
The results include summaries generated by 17 systems, each of which has a mean coverage score assigned by a human subject.
We computed the correlation coefficients between the average F-measure
  and the average mean coverage score for 50 topics.
Table \ref{correl}
shows Pearson's $r$ and Spearman's $\rho$. In the table, ``F-measure (R$_1$)'' and
``F-measure (R$_2$)'' indicate the F-measures calculated using oracle summaries optimized to
$\text{\sc Rouge}_1$ and $\text{\sc Rouge}_2$, respectively. ``M''
indicates the F-measure calculated using multiple oracle summaries, and ``S''
indicates F-measures calculated using randomly selected oracle
summaries. ``multi'' indicates oracle summaries obtained from a set of
references, and ``single'' indicates oracle summaries obtained from a
reference summary in the set. 
For ``S,''  we randomly selected a single oracle summary and calculated
the F-measure 100 times and took the average value with
 the 95\% confidence interval of the F-measures by bootstrap resampling.

The results demonstrate that the F-measures are strongly correlated with human judgment. Their values are comparable with those of $\text{\sc Rouge}_{1,2}$. In particular,
F-measure (R$_1$) (single-M) achieved the best Spearman's $\rho$ result.
When comparing ``single'' with ``multi,'' Pearson's $r$ of
``multi'' was slightly lower than that of ``single,'' and the Spearman's $r$
of ``multi'' was almost the same as those of ``single.''
``M'' has significantly
better performance than ``S.''
 These results imply that F-measures based on oracle summaries are a
 good evaluation measure and that oracle summaries have the potential to
 be an alternative to human-made reference summaries in terms of
 automatic evaluation.
Moreover, the enumeration of the oracle summaries for a given reference summary
or a set of reference summaries is essential for automatic evaluation. 

\subsubsection{Search Efficiency }

\begin{table}[tb]
 \small
 \begin{center}
  \begin{tabular}{l|ll}
   Metric & $r$ & $\rho$\\
   \hline
   $\text{\sc Rouge}_1$ & .861 & .760\\
   $\text{\sc Rouge}_2$ & {\bf .907} & .831\\
   \hline
F-measure (R$_1$) (single-M)  & .857 & {\bf .855}\\
F-measure (R$_1$) (single-S)  & .815-.830 & .811-.830\\
F-measure (R$_2$) (single-M)  & .904 & .826\\
F-measure (R$_2$) (single-S)  & .855-.865 & .740-.760\\
F-measure (R$_1$) (multi-M)  & .814 & .841\\
F-measure (R$_1$) (multi-S)  & .794-.802 & .803-.813\\
F-measure (R$_2$) (multi-M)  & .824 & .846\\
F-measure (R$_2$) (multi-S)  & .806-.816 & .797-.817\\
  \end{tabular}
  \caption{Correlation coefficients between automatic evaluations and
  human judgments on DUC-2004 corpus}
  \label{correl}
 \end{center}
\end{table}

To demonstrate the efficiency of our search algorithm against
the naive exhaustive search method, we compared the number of 
feasible solutions (sets of sentences that satisfy the length
constraint) with the number of summaries that were checked in our search
algorithm. 
The algorithm that counts the number of
feasible solutions is shown in Appendix B.

Table \ref{order} shows the median number of feasible solutions and
checked summaries yielded by our method for each data set (in the case
of ``single'').
The differences in the number of feasible solutions between $\text{\sc Rouge}_1$
and $\text{\sc Rouge}_2$ are very large. 
Input set ($|D|$) of $\text{\sc Rouge}_1$ is much larger than $\text{\sc
Rouge}_1$.
On the other hand, the differences between $\text{\sc Rouge}_1$ and
$\text{\sc Rouge}_2$ in our method
are of the order of $10$ to $10^2$. When comparing our method with naive exhaustive searches, its search space is significantly smaller.
The differences are of the order of $10^7$ to $10^{30}$ with $\text{\sc Rouge}_1$ and $10^4$
to $10^{17}$ with $\text{\sc Rouge}_2$.
These results demonstrate the efficiency of our branch and bound technique.

In addition, we show an example of the processing time for extracting one oracle summary and enumerating all of the oracle
summaries for the reference summaries in the DUC-2004 corpus with a Linux
machine (CPU:~Intel$^{\textregistered}$ Xeon$^{\textregistered}$ X5675
(3.07GHz)) with 192 GB of RAM. We utilized {\tt CPLEX 12.1} to solve the ILP
problem.  Our algorithm was implemented in C++ and
complied with GCC version 4.4.7.
The results show that we needed 0.026 and 0.021 sec. to extract one oracle
summary per reference summary and 0.047 and 0.031 sec. to extract
one oracle summary per set of reference summaries 
for $\text{\sc Rouge}_1$ and $\text{\sc Rouge}_2$, respectively.
We needed 11.90 and 1.40
sec. to enumerate the oracle summaries per reference summary and 102.94
and 3.65 sec. per set of reference summaries for $\text{\sc
Rouge}_1$ and $\text{\sc Rouge}_2$, respectively.
The extraction of one oracle summary for a reference summary
can be achieved with the ILP solver in practical time and 
the enumeration of oracle summaries is also efficient.
However, to enumerate oracle summaries, we needed several weeks
for some topics in DUCs 2005 to 2007 since they hold a huge number of source sentences. 

\section{Conclusions}

\begin{table}[tb]
 \begin{center}
  {\tabcolsep=0.9mm 
  \begin{tabular}{l|ll|ll}
     &  \multicolumn{2}{c|}{{\sc Rouge}$_1$} & \multicolumn{2}{c}{{\sc Rouge}$_2$}\\
   & Naive & Proposed 
   & Naive & Proposed\\
   \hline
01   & 3.66$\times 10^{13}$
       & 5.75$\times 10^3$
	   & 3.32$\times 10^7$
	       & 1.00$\times 10^3$\\
02   & 1.12$\times 10^{12}$
       & 4.64$\times 10^3$
	   & 1.34$\times 10^7$
	       & 8.87$\times 10^2$\\
03 & 1.62$\times 10^{11}$
       & 3.65$\times 10^3$
	   & 6.37$\times 10^6$
	       & 8.19$\times 10^2$\\
04   & 9.65$\times 10^{10}$
       & 4.47$\times 10^3$
	   & 6.90$\times 10^6$
	       & 9.83$\times 10^2$\\
05   & 5.48$\times 10^{36}$
       & 2.32$\times 10^6$
	   & 3.48$\times 10^{21}$
	       & 7.03$\times 10^4$\\
06  & 1.94$\times 10^{32}$
       & 1.97$\times 10^6$
	   & 2.11$\times 10^{20}$
	       & 5.08$\times 10^4$\\
07   & 4.14$\times 10^{28}$
       & 1.40$\times 10^6$
	   & 1.81$\times 10^{19}$
	       & 2.60$\times 10^4$\\
  \end{tabular}
}
  \caption{Median number of summaries checked by each search method}
  \label{order}
 \end{center}
\end{table}

To analyze the limitations and the future direction of extractive
summarization, this paper proposed (1) Integer Linear Programming (ILP) formulation
to obtain extractive oracle summaries in terms of {\sc Rouge}$_n$ scores
 and  (2) an algorithm that enumerates all oracle summaries to exploit
 F-measures that evaluate 
 the sentences extracted by systems. 

The evaluation results obtained from the corpora of DUCs 2001 to 2007 
 identified the following:
 (1)
room still exists to improve the $\text{\sc Rouge}_n$ scores of
 extractive summarization systems even though the
$\text{\sc Rouge}_n$ scores of the oracle summaries fell below the
theoretical upper bound $\text{\sc Rouge}_n{=}1$.
(2) Over 80\% of the reference summaries and from 60\% to 90\% of the sets of
 reference summaries 
have multiple oracle summaries, and the F-measures computed by
utilizing the enumerated oracle
summaries showed  stronger correlation with human judgment than those
computed from single oracle summaries.

\section*{Appendix A.}

\begin{proof}
We can rewrite the right side of equation (\ref{our_rouge}) as follows:
\begin{equation}
\begin{split}
\text{\sc Rouge}(\boldsymbol{R},V){+}\text{\sc Rouge}'_n(\boldsymbol{R},V,W)=&\\
& \kern-15em
 \frac{
 \displaystyle
 \sum_{k=1}^{|\boldsymbol{R}|}\sum_{t_n \in U({\cal R}_k)} f(t_n,{\cal
 R}_k,{\cal V,W})}{
 \displaystyle
\sum_{k=1}^{|\boldsymbol{R}|} \sum_{t_n
 \in U({\cal R}_k)} N(t_n,{\cal R}_k)}.
\end{split}
 \end{equation}
Here,  $f(t_n,{\cal R}_k,{\cal V,W})$ is defined as follows:
\begin{equation}
\label{function_f}
\begin{split}
f(t_n,{\cal R}_k,{\cal V,W}){=}\min\{N(t_n,{\cal R}_k),N(t_n,{\cal V})\}+&\\
& \kern-15em \min\{N(t_n,{\cal R}_k\setminus {\cal V}),N(t_n,{\cal W})\}.
\end{split}
\end{equation}
$N(t_n,{\cal R}_k\setminus {\cal V})$ is the number of times
$t_n$ occurs in the multiple set ${\cal R}_k \setminus {\cal V}$. Equation
(\ref{function_f}) is rewritten as
\begin{equation}
\label{final}
\begin{split}
f(t_n,{\cal R}_k,{\cal V,W}){=}\min\{N(t_n,{\cal R}_k),N(t_n,{\cal V})\}+&\\
& \kern-20em \min\{\max\{N(t_n,{\cal R}_k){-}N(t_n,{\cal V}),0\},N(t_n,{\cal W})\}.
\end{split}
\end{equation}

The solutions of equation (\ref{final}) are obtained by considering
the following three conditions:

\begin{enumerate}
 \item If $N(t_n,{\cal R}_k)-N(t_n,{\cal V})>0$ and $N(t_n,{\cal
       R}_k)-N(t_n,{\cal V})>N(t_n,{\cal W})$, then $f(t_n,{\cal
       R}_k,{\cal V,W})=N(t_n,{\cal V})+N(t_n,{\cal W})$
 \item If $N(t_n,{\cal R}_k)-N(t_n,{\cal V})>0$ and $N(t_n,{\cal
       R}_k)-N(t_n,{\cal V})<N(t_n,{\cal W})$, then $f(t_n,{\cal
       R}_k,{\cal V,W})=N(t_n,{\cal R}_k)$
 \item If $N(t_n,{\cal R}_k)-N(t_n,{\cal V}) < 0$,
 then $f(t_n,{\cal R}_k,{\cal V,W})=N(t_n,{\cal R}_k)$
\end{enumerate}

From the above relations,

\begin{eqnarray}
f(t_n,{\cal R}_k,{\cal V,W})=\nonumber&\\
& \kern-8em \min\{N(t_n,{\cal R}_k),N(t_n,{\cal V})+N(t_n,{\cal W})\}.
\end{eqnarray}

Thus,

 \begin{equation}
\begin{split}
\text{\sc Rouge}_n(\boldsymbol{R},V{\cup}W)=&\\
 &\kern-10em \frac{
 \displaystyle
 \sum_{k=1}^{|\boldsymbol{R}|}\sum_{t_n{\in}U({\cal R}_k)}\min\{N(t_n,{\cal R}_k),N(t_n,{\cal
 V}){+}N(t_n,{\cal W})\}}{
 \displaystyle
 \sum_{k=1}^{|\boldsymbol{R}|}\sum_{t_n{\in}U({\cal R}_k)} N(t_n,{\cal R}_k)}
\end{split}
 \end{equation}

\end{proof}

\section*{Appendix B.}

\begin{algorithm}[tb]
 \caption{Dynamic Programming Algorithm to Count the Number of the
 Feasible Summaries}
 \label{subsetsum}
\footnotesize
 \begin{algorithmic}[1]
  \FUNC {\text{\sc GetNumFS}($D,L_{\rm max}$)}
  \STATE $C[0][0]\leftarrow 1, C[0][j] \leftarrow 0, 1 \le j \le L_{\rm max}$
  \FOR  {$i=1$ to $|D|$}
 \FOR  {$j=0$ to $L_{\rm max}$}
  \IF {$j-\ell(s_i) \ge 0$}
  \STATE $C[i][j] \leftarrow C[i-1][j]+C[i-1][j-\ell(s_i)]$
  \ELSE
  \STATE $C[i][j] \leftarrow C[i-1][j]$
  \ENDIF
  \ENDFOR
  \ENDFOR
  \STATE {\bf return} $\displaystyle\mathop\sum_{j=1}^{L_{\rm max}} C[|D|][j]$
  \ENDFUNC
 \end{algorithmic}
\end{algorithm}

We propose an algorithm to compute the number of feasible solutions
under the length constraint by extending the dynamic programming based
approach for the subset sum problem \cite{algo}.
We define $C[i][j](0 \le i \le |D|, 0 \le j \le L_{\rm max})$, which stores the
number of feasible solutions (length is less than $j$) that can
be obtained from set $\{s_1,\ldots,s_i\}$ as follows:
\begin{itemize}
\item Initialization
 \begin{equation}
  \begin{array}{lr}
   C[0][j]=0 &  (0 \le j \le L_{\rm max})
  \end{array}
 \end{equation}
        \item Recurrence ($1 \le i \le |D|$)
        \begin{equation}
       \begin{split}
        \kern-1em C[i][j]{=} &\\
        &
        \kern-5em\left\{
        \begin{array}{lr}
         C[i{-}1][j]+C[i{-}1][j{-}\ell(s_i)] &\text{if } j{-}\ell(s_i) \ge 0\\
         C[i{-}1][j] &\text{otherwise}
        \end{array}
        \right.
       \end{split}
        \end{equation}
 \end{itemize}

Algorithm \ref{subsetsum} is a dynamic program that fills out
the ($|D|+1) \times (L_{\rm max}+1$) table. After the table is filled,
each cell on the $|D|+1$-th line stores the number of feasible solutions.
In the algorithm, first, we pick up the sentences that contain
an n-gram that appears in the reference summary at least once and
recursively count the number of feasible solutions.
Then, the sum of the $j$-th line whose index is from 1 to $L_{\rm max}$
indicates the number of feasible solutions.
The order of the algorithm is $O(n L_{\rm max})$.

\bibliographystyle{eacl2017}
\bibliography{mybibfile}

\end{document}